\setlist[itemize]{leftmargin=*}
\DeclareMathOperator*{\argmax}{arg\,max}
\newtheorem{theorem}{Theorem}
\newtheorem{proposition}{Proposition}
\newtheorem{remark}{Remark}
\title{Evaluating Fairness of Machine Learning Models Under Uncertain and Incomplete Information}
\author{Pranjal Awasthi\\Google \& Rutgers University\\pranjalawasthi@google.com
 \and
Alex Beutel\\Google\\alexbeutel@google.com
\and 
Matthäus Kleindessner\\Amazon\thanks{Work done while being a postdoc at the University of Washington.}\\matkle@amazon.com
\and 
Jamie Morgenstern\\University of Washington \& Google\\jamiemmt@cs.washington.edu
\and
Xuezhi Wang\\Google\\xuezhiw@google.com}
\begin{document}
\maketitle
\date{}
\begin{abstract}
Training and evaluation of fair classifiers is a challenging problem. This is partly due to the fact that most fairness metrics of interest depend on both the sensitive attribute information and label information of the data points. In many scenarios it is not possible to collect large datasets with such information. An alternate approach that is commonly used is to separately train an attribute classifier on data with sensitive attribute information, and then use it later in the ML pipeline to evaluate the bias of a given classifier. While such decoupling helps alleviate the problem of {\em demographic scarcity}, it raises several natural questions such as: {\em how should the attribute classifier be trained?}, and {\em how should one use a given attribute classifier for accurate bias estimation?} In this work we study this question from both theoretical and empirical perspectives. 

We first experimentally demonstrate that the test accuracy of the attribute classifier is not always correlated with its effectiveness in bias estimation for a downstream model. In order to further investigate this phenomenon, we analyze an idealized theoretical model and characterize the structure of the optimal classifier. Our analysis has surprising and counter-intuitive implications where in certain regimes one might want to distribute the error of the attribute classifier as unevenly as possible among the different subgroups. Based on our analysis we develop heuristics for both training and using attribute classifiers for bias estimation in the data scarce regime. We empirically demonstrate the effectiveness of our approach on real and simulated data.   
\end{abstract}

\maketitle

\section{Introduction}
\label{sec:intro}
Estimating a machine learning~(ML) model's difference in accuracy for different  demographic groups is a challenging task in many applications, in large part due to imperfect or missing demographic data~\cite{holstein2019improving}. Demographic information might be unavailable 
for many reasons: users may choose to withhold this information (for privacy or other personal preferences), or the dataset may have been constructed for use in a setting where collecting the  demographic information of the dataset's subjects was unnecessary, undesirable, or even illegal \cite{zhang2018assessing, weissman2011advancing, fremont2016race}.

Responsible ML practitioners and auditors may want to evaluate model performance across different demographic groups even if demographic information was not collected explicitly, and as researchers of fairness we want to enable this.
For example, a microlender who builds a model to prescreen applicants for a loan may explicitly avoid maintaining demographic records on applicants, but may also want to ensure that the rate at which their model approves a candidate is nearly independent of the applicant's gender.  Can one assess this bias of the model, even when  training and validation data do not contain explicit gender information? 

This presents a common problem: how can one audit the performance of a model for predicting $Y$, \emph{conditioned on demographic information $A$},
 when $A$ and $Y$ are observed jointly on little or no data? More formally, consider  two datasets $D_1, D_2$, where $D_1$ has features in $X$ and labels in $Y$, and $D_2$ has  features in $X$ and demographic information in $A$. How should we use these datasets to evaluate a model $f$'s performance predicting $Y$ conditioned on $A$, for the distribution that generated $D_1$? 
 With no further assumptions, 
  this is impossible: if the two datasets are generated from distributions bearing no resemblance to each other, correlations existing between $X$ and $Y$ (and between $X$ and $A$) may be arbitrarily different for the two distributions. 
Modelers may still wish to understand if their separate datasets can help them with this estimation problem, and what properties their two datasets might need to have to result in good estimations. 

One approach, particularly natural for enabling many different teams participating in the development of models without demographic information, 
 is to use proxy attributes as a replacement for the true sensitive attribute. The use of such proxy attributes has been widely used in domains such as health care \cite{brown2016using, fremont2005use} and finance \cite{bureau2014using}. One can view a particular proxy attribute as a coarse {\em attribute classifier} that predicts the sensitive attribute information for a given example. For popular fairness metrics such as equalized odds or demographic parity \cite{calders2009building, zliobaite2015relation,hardt2016equality,zafar2017www}, the sensitive attribute information obtained from the classifier can then be used in conjunction with the predicted label information to get an estimate of the bias of a given model. The advantage of this approach is that it decouples the data requirement, and in particular, the attribute classifier can be trained on a separate dataset without any label information present. On the other hand, such an approach may produce very poor estimates, depending upon how the sources of data relate to one another. 
 
 Nonetheless, it is interesting to ask what principles should guide the choice of a proxy attribute, or more generally the design of a sensitive attribute classifier, and similarly, what principles should be followed when using such classifiers for bias estimation. In this work, we aim to understand this question.
 If one wishes to produce the most accurate estimate of the bias term, is the optimal approach to train the attribute classifier to have the highest possible $0/1$ accuracy, or
 to have similar error rates for each demographic group? 
It turns out that neither of these criteria alone captures the structure of the {\em optimal} attribute classifier.

To demonstrate this, we consider the measurement of violation of {\em equal opportunity}, or true positive rates for different groups \cite{hardt2016equality}, and show that the error in the bias estimates, as a result of using a proxy attribute/attribute classifier depends
not only on the error rate of the classifiers but also on how these errors (and the data more generally) are distributed.
Our analysis has surprising and counter-intuitive implications -- in certain regimes 
the optimal attribute classifier is the one that has the most unequal distribution of errors across the two subgroups! 
Our main contributions are summarized below.
 \begin{itemize}
     \item \textbf{Problem formulation:} We formalize and study the problem of estimating the bias of a given machine learning model using data with only label information and model output, and using a separate attribute classifier to predict the sensitive attributes. We call this the {\em demographically scarce} regime. In particular, our setting captures the common practice of using proxy attributes as a replacement for the true sensitive attribute. We experimentally demonstrate that the accuracy of the attribute classifier does not solely capture its effectiveness in bias estimation.
     \item \textbf{Theoretical analysis:} For the case of the {\em equal opportunity} metric \cite{hardt2016equality}, we present a theoretical analysis under a simplified model. Our analysis completely characterizes the structure of the optimal attribute classifier and shows that the error in the bias estimates depends on the error rate of the classifier as well as the {\em base error rate}, a property of the data distribution. As a result, we show that in certain regimes of the base error rate, the optimal attribute classifier, surprisingly, is the one that has the most unequal distribution of errors among the two subgroups.
     \item \textbf{Empirical validation:} Our theory predicts natural interventions, aimed at better bias estimation, that can be performed either at the stage of designing the attribute classifier or at the stage of using a given attribute classifier. We study their effectiveness in scenarios where our modeling assumptions hold and their sensitivity to violations of the assumptions.
     \item \textbf{An active sampling algorithm:} Finally, we propose an active sampling based algorithm for accurately estimating the bias of a model given access to an attribute classifier for predicting the sensitive attribute. Our approach makes no assumption about the structure of the attribute classifier or the data distribution. We demonstrate via experiments on real data that our sampling scheme performs accurate bias estimation using significantly fewer samples in the demographically scarce regime, as compared to other approaches.

 \end{itemize}
 While highlighting the need for careful considerations when designing such attribute classifiers, our analysis provides concrete recommendations and algorithms that can be used by 
 ML 
 practitioners.

 \begin{remark}
Inferring demographic information should be done with care, even when used only 
for auditing a model's disparate treatment. Any estimation of demographics will invariably have inaccuracies.
This can be particularly problematic when considering, for example, gender, as misgendering an individual can cause them substantial harm. 
Furthermore, building a model to infer demographic information purely for internal evaluation leaves open the possibility that the model may ultimately be used more broadly, 
with possibly unintended consequences. 

That said, preliminary evidence of bias may be invaluable in arguing for resources
to do a more careful evaluation of a machine learning system. For this reason, we believe 
that 
methods to estimate the difference in predictive performance across demographic groups can serve as a useful first step in understanding and improving the system's treatment of underserved populations.
\end{remark}

\section{Related Work}


Various fairness metrics of interest have been proposed for classification~\cite{hardt2016equality, dwork2012fairness, pleiss2017fairness, kleinberg2017}, ranking~\cite{celis_fair_ranking, beutel2019fairness}, unsupervised learning~\cite{chierichetti2017fair,samadi2018price, fair_k_center_2019}, and online learning~\cite{joseph2016fairness,jabbari2017fairness,gillen2018online, blum2018preserving}. 
%
%
These notions of fairness are usually categorized as either  {\em group fairness} or {\em individual fairness}. In the case of group fairness, training a model to satisfy such fairness metrics and evaluating the metrics on a pre-trained model typically requires access to both the sensitive attribute and the label information on the same dataset. 
Only recently, ML research
has explored the implications of using proxy attributes and the sensitivity of fair machine learning algorithms to this noisy demographic data \cite{gupta2018,lamy2019, awasthi2019equalized, coston2019, schumann2019transfer}.

The most related work to ours \cite{chen2019fairness,kallus2020} also study 
the problem of estimating the bias of predictor $f$ of $Y$ from $X$ as a function of $A$ with few or no samples from the joint distribution over $(X, Y, A)$.
In \cite{chen2019fairness} the authors aim to estimate the disparity of a model $f$ predicting $Y$ from $X$, along $A$ by considering predicting $A$ from $X$ via a threshold $q$, and then assuming $A=a$ whenever $P[A = a | X  = x] > q$. This can  work well with unlimited data from the joint distribution over $A, X$, but can both overestimate and underestimate the bias because of the misclassification of $A$ using said thresholding rules. 

The work of \cite{kallus2020} investigates how to perfectly or partially identify the same bias parameter, making few or no assumptions about independence between $A, Y, f$ and $X$. Given that exact identification is usually impossible, the authors describe how to find feasible regions of various disparity measures making very few assumptions about independence between $f, Y, X, A$. Similar to our work, the authors in \cite{kallus2020} study the setting where one has access to a dataset involving $(y,x)$ and another dataset involving $(a,x)$. Given that exact identification is usually impossible, they construct an uncertainty set (confidence interval) for bias estimates assuming access to the Bayes optimal classifier, i.e., the classifier that predicts sensitive attribute according to $P(a|x)$. An important contribution of our paper is to show that the Bayes optimal attribute classifier is not the right classifier in the first place. We demonstrate this via experiments in Table~\ref{tab:model_selection}, Theorem~\ref{thm:bayes-opt-example}, and in fact we go deeper and characterize the structure of the optimal attribute classifier in Theorem~\ref{thm:opt-classifier}~(under assumptions). 

Our work also explains why the confidence intervals in the experiments of~\cite{kallus2020} are large. In particular, the work of~\cite{kallus2020} focuses only on the setting where there is no common data, hence their confidence regions will never shrink with dataset size due to inherent uncertainty. We however believe that it is practically more relevant to study how we can use a little amount of common data and produce accurate estimates. This motivates our active sampling approach as proposed in Section~\ref{sec:active_sampling}.


\section{Setup and Notation}


We consider the classification setting where $\mathcal{X}$ denotes the feature space, $\mathcal{Y}$  the label space and $\mathcal{A}$  the sensitive attribute space. For simplicity, we  assume  $\mathcal{Y} = \{-1,+1\}$ and $\mathcal{A} = \{0,1\}$. Consider two datasets $D_1$ and $D_2$. $D_1$ is drawn from the marginal over $(\mathcal{X}, \mathcal{A})$, of a distribution $P$ over $\mathcal{X} \times \mathcal{Y} \times \mathcal{A}$. Similarly, $D_2$ is drawn from the marginal over $(\mathcal{X}, \mathcal{Y})$, of a distribution $Q$ over $\mathcal{X} \times \mathcal{Y} \times \mathcal{A}$. Thus,  both datasets contain the same collection of features, $D_1$ contains attribute information, and $D_2$ contains label information. In this setting, 
given a classifier $f: \mathcal{X} \rightarrow \mathcal{Y}$, we aim to estimate the {\em bias} of $f$: the difference in the performance of $f$ on $Q$ over $A$. In this work we define the bias to be the {\em equal opportunity} metric \cite{hardt2016equality} defined as
\begin{align}
\label{eq:equal-opportunity}
bias_Q(f) &= |\alpha - \beta|, 
\end{align}
where
\begin{align}\label{eq:def_true_alpha}
   \alpha = \mathbb{P}_{(x,a,y) \sim Q} \big(f(x)=1 | y=1, a=1 \big),\\
   \beta = \mathbb{P}_{(x,a,y) \sim Q} \big(f(x)=1 | y=1, a=0 \big). 
\end{align}
We consider the {\em demographically scarce} regime where the designer of an attribute classifier has access to $D_1$ and the user of the label classifier has access to $D_2$ and additionally has access to the attribute classifier $h: \mathcal{X} \rightarrow \mathcal{A}$ trained on $D_1$. Here, no party has access to both datasets. We aim to  understand how the classifiers $(h,f)$ can be used in combination for accurate bias estimation.

From the perspective of the user of an attribute classifier, it is easy to see that no matter what the attribute classifier is, one cannot get any non-trivial estimate of the bias of $f$ without access to some common data drawn from $\mathcal{X} \times \mathcal{Y} \times \mathcal{A}$. Here we provide a simple proof of this fact. In particular, we will construct a scenario where there is a fixed attribute classifier $h: \mathcal{X} \to \mathcal{A}$. We will construct two distributions $Q_1, Q_2$ such that the output distribution of the attribute classifier is the same in both the cases. Furthermore, any label classifier trained on just $(x,y)$ pairs will have the same output distribution in both the cases. However, the estimates of the bias when using the attribute classifier will differ wildly. Formally, we have the following proposition.
\begin{proposition}
\label{prop:lower-bound-stronger}
Consider a fixed attribute classifier $h: \mathcal{X} \to \mathcal{A}$ and a fixed learning algorithm ALG for computing a label classifier from a distribution over $\mathcal{X} \times \mathcal{Y}$. Then, there exists two distribution $Q_1, Q_2$ over $\mathcal{X} \times \mathcal{Y} \times \mathcal{A}$ such that: a) $Q_1, Q_2$ have the same marginals over $\mathcal{X} \times \mathcal{A}$, b) ALG produces the same label classifier $f$ given access to $Q_1$ or $Q_2$, and c) either the label classifier $f$ is the Bayes optimal classifier, or the ratio of the estimated bias~(using $h$) to the true bias of $f$ on either $Q_1$ or $Q_2$ is $0$ or $\infty$. 
\end{proposition}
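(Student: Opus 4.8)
The plan is to exploit the fact that neither the learned classifier $f$ nor the bias estimate produced from $h$ can ``see'' the coupling between $Y$ and $A$, whereas the true bias depends on it crucially. Concretely, since $f$ is the output of ALG run on the $(\mathcal{X},\mathcal{Y})$-marginal, and since both $f$ and $h$ are deterministic functions of $x$, the estimated bias --- computed from the joint law of $(f(x),y,h(x))$ --- is a functional of the $(\mathcal{X},\mathcal{Y})$-marginal alone. The true bias $|\alpha-\beta|$, on the other hand, is computed from the joint law of $(f(x),y,a)$ and therefore depends on how $y$ and $a$ are coupled given $x$. My strategy is thus to fix both pairwise marginals and move only this coupling: holding the $(\mathcal{X},\mathcal{A})$-marginal fixed gives condition (a), while holding the $(\mathcal{X},\mathcal{Y})$-marginal fixed forces ALG to return the same $f$ (condition (b)) and pins the estimated bias to a single common value on $Q_1$ and $Q_2$.

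For the construction I would take $\mathcal{X}$ to be a small finite set containing atoms that realize both values of $h$ and both labels, fix a base measure $\mu$ on $\mathcal{X}$ together with conditionals $\mathbb{P}(a\mid x)$ and $\mathbb{P}(y\mid x)$, and run ALG on the induced $(\mathcal{X},\mathcal{Y})$-marginal to obtain $f$. I would then define $Q_1$ using the independent (product) coupling of $y$ and $a$ within each $x$, and $Q_2$ using an extremal coupling that concentrates the $\{y=1,a=1\}$ mass on the region $\{f=1\}$. Both couplings are admissible (they respect the Fréchet bounds induced by the fixed conditionals) and both reproduce the two prescribed pairwise marginals, so (a) and (b) hold by construction. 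A short calculation shows that the product coupling yields $\alpha=\beta$, hence true bias $0$ on $Q_1$, while the extremal coupling separates $\alpha$ from $\beta$, giving strictly positive true bias on $Q_2$.

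The punchline is a case-free dichotomy. Let $\widehat{B}$ denote the estimated bias, which is identical on $Q_1$ and $Q_2$. If $\widehat{B}>0$, then on $Q_1$ the ratio of estimated to true bias is $\widehat{B}/0=\infty$; if $\widehat{B}=0$, then on $Q_2$ it is $0/(\text{positive})=0$. In either case the second disjunct of (c) holds on at least one of the two distributions, and I never have to evaluate $\widehat{B}$ explicitly.

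The main obstacle, and the reason for the ``either $f$ is Bayes optimal'' escape clause, is the degenerate regime in which no coupling can move the true bias. This happens precisely when $f$ is constant on the support (then $\alpha\equiv\beta$ and the true bias is $0$ for every coupling, collapsing the ratio to the undefined $0/0$) or when $h$ is constant (then one of the conditional estimates is undefined). I would absorb this case through the first disjunct: by choosing $\mathbb{P}(y=1\mid x)$ to lie on the appropriate side of $1/2$ at each atom, I can arrange that the degenerate $f$ returned by ALG coincides with the Bayes optimal predictor for the marginal, so that (c) holds via ``$f$ is Bayes optimal.'' Reconciling this adaptive choice of marginal with ALG's behavior --- the marginal determines $f$, yet $f$ must land on the Bayes optimal classifier in the degenerate branch --- is the delicate part; everything else is bookkeeping with the Fréchet constraints and checking that the estimator's denominators are nonzero.
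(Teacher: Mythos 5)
Your proof follows the same skeleton as the paper's: fix the $(\mathcal{X},\mathcal{Y})$-marginal so that ALG returns the same $f$ and the estimated bias (a functional of the law of $(f(x),y,h(x))$, hence of the $(\mathcal{X},\mathcal{Y})$-marginal alone) is pinned to a common value $\widehat{B}$; exhibit two joints on which the true bias is $0$ and strictly positive respectively; and conclude by the case-free dichotomy ($\widehat{B}>0$ gives ratio $\infty$ on the zero-bias distribution, $\widehat{B}=0$ gives ratio $0$ on the other) without ever evaluating $\widehat{B}$. Where you genuinely differ is in how $Q_1,Q_2$ are built. The paper changes the conditional law of $a$ given the events $\{f=1,y=1\}$ and $\{f=0,y=1\}$ (uniform versus point mass), which alters $\mathbb{P}(a\mid x)$ and therefore only preserves the marginal over $\mathcal{X}$ --- the paper's own bullet points claim only $Q_1(\mathcal{X})=Q_2(\mathcal{X})$, which is weaker than condition (a) as stated. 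You instead hold both pairwise marginals fixed and vary only the coupling of $y$ and $a$ given $x$ (product versus extremal), which actually delivers condition (a) in its literal form; this is a real improvement. Two caveats. First, your claim that ``a short calculation shows that the product coupling yields $\alpha=\beta$'' is false for a general conditional $\mathbb{P}(a\mid x)$: under the product coupling $\alpha$ and $\beta$ are averages of $\mathds{1}(f(x)=1)$ reweighted by $\mathbb{P}(a=1\mid x)$ and $\mathbb{P}(a=0\mid x)$ respectively, and these differ unless $\mathbb{P}(a=1\mid x)$ is constant over the relevant region. You must explicitly choose $\mathbb{P}(a=1\mid x)\equiv c$ (e.g.\ $c=1/2$); with that choice the product coupling gives $\alpha=\beta=\mathbb{P}(f(x)=1\mid y=1)$ and the extremal per-$x$ coupling separates them, so the argument goes through. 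Second, the degenerate regime you flag (constant $f$, or one of the regions $\{f=1,y=1\}$, $\{f=0,y=1\}$ being null) is handled no more rigorously by you than by the paper --- both of you lean on the ``$f$ is Bayes optimal'' escape clause without fully reconciling it with ALG being a black box --- so this is a shared looseness rather than a new gap you introduced.
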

\begin{proof}
Given the learning algorithm ALG for the labels and an attribute classifier $h$, we will construct two distributions $Q_1$ and $Q_2$ over $\mathcal{X} \times \mathcal{Y} \times \mathcal{A}$ such that the following holds:
\begin{itemize}
\item $Q_1(\mathcal{X})$ is the same as ${Q_2}(\mathcal{X})$, thereby ensuring that the distribution of $h(x)$ over $Q_1, Q_2$ is the same. 
\item ${Q_1}(\mathcal{X} \times \mathcal{Y})$ is the same as ${Q_2}(\mathcal{X} \times \mathcal{Y})$, thereby ensuring that ALG produces the same classifier $f$.
\item On $Q_1$ the true bias of $f$ is zero and on $Q_2$ the true bias of $f$ is one, thereby ensuring that the ratio of the estimated will be distorted by either zero or $\infty$.
\end{itemize}
To construct the two distributions fix any marginal distribution over $\mathcal{X} \times \mathcal{Y}$ and let $f$ be any classifier for predicting $y$ given $x$ that is not Bayes optimal. Since $f$ is not Bayes optimal, we can consider two non-empty regions $A=\{f=1, Y=1\}$ and $B=\{f=0, Y=1\}$. Construct $Q_1$ by setting the conditional distribution (conditioned on $A$) of $\mathcal{A}$ to be uniformly distributed and similarly the distribution of $\mathcal{A}$ conditioned on $B$ to be uniformly distributed. Construct $Q_2$ to be such that the conditional distribution of $\mathcal{A}$ on $A$ is entirely supported on $a=1$ and the conditional distribution of $\mathcal{A}$ on $B$ is entirely supported on $a=0$. Now it is easy to see that $f$ has zero bias on $Q_1$ and a bias value of one on $Q_2$. Since the estimated bias using $h$ will result in the same value for $Q_1$ and $Q_2$, in at least one of the cases the distortion must be $0$ or $\infty$.
\end{proof}

In the next two sections we study the following questions: a) how should one design an attribute classifier?, and b) how can one use an attribute classifier along with minimal common data and perform accurate bias estimation? We will focus on the case of $P=Q$, i.e., both the designer and the user of an attribute classifier have marginals from the same joint distribution. 


\section{Perspective of the Attribute Classifier Modeler}

What properties of the attribute classifier are desirable for bias estimation? At first thought, training the most accurate classifier seems to be a reasonable strategy. However, it is easy to construct instances where using the Bayes optimal attribute classifier does not lead to the most accurate bias estimation. This is formalized in the Theorem below. See Appendix for the proof.
\begin{theorem}
\label{thm:bayes-opt-example}
Let $x_1, x_2$ be binary attributes, $a$ be a binary sensitive attribute and $y$ be a binary class label. Furthermore, let $f$ be a label classifier given by $x_2$, i.e., $f(x_1, x_2, a) = x_2$. Then there exists a joint distribution $Q$ over $(x_1, x_2, a, y)$ tuples such that the bias of $f$ is zero, i.e., $bias_Q(f) = 0$. On the other hand, using the Bayes optimal predictor for the attribute $a$, leads to an estimated bias of one for $f$.
\end{theorem}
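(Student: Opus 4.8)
The plan is to exhibit an explicit joint distribution $Q$ supported on a handful of the $16$ possible tuples $(x_1,x_2,a,y)$. Since $f(x_1,x_2,a)=x_2$, the two quantities defining the true bias simplify to $\alpha=\mathbb{P}_Q(x_2=1\mid y=1,a=1)$ and $\beta=\mathbb{P}_Q(x_2=1\mid y=1,a=0)$, whereas the estimated bias replaces the true attribute $a$ by the Bayes optimal prediction $h(x_1,x_2)=\argmax_{a}\mathbb{P}_Q(a\mid x_1,x_2)$, giving $\widehat\alpha=\mathbb{P}_Q(x_2=1\mid y=1,h=1)$ and $\widehat\beta=\mathbb{P}_Q(x_2=1\mid y=1,h=0)$. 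The goal is therefore to arrange the mass so that, on the slice $y=1$, the attribute $a$ is balanced with respect to $x_2$ (forcing $\alpha=\beta$, hence $\mathrm{bias}_Q(f)=0$), while simultaneously the Bayes optimal classifier $h$ coincides with $x_2$ on the relevant features (forcing $\widehat\alpha=1$ and $\widehat\beta=0$, hence estimated bias $1$).

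I would use $x_1$ as a bookkeeping variable that couples $x_2$ to $x_1$ inside the positive slice, placing all of the $y=1$ mass on the two feature vectors $(x_1,x_2)=(0,0)$ and $(1,1)$, split evenly between $a=0$ and $a=1$ — concretely, equal weight on the atoms $(0,0,0,1)$, $(0,0,1,1)$, $(1,1,0,1)$, $(1,1,1,1)$. Within $y=1$ this gives $\mathbb{P}_Q(x_2=1\mid y=1,a=1)=\mathbb{P}_Q(x_2=1\mid y=1,a=0)=\tfrac12$, so the true bias vanishes. The subtlety is that with only these atoms the conditional $\mathbb{P}_Q(a\mid x_1,x_2)$ equals $\tfrac12$ at both $(0,0)$ and $(1,1)$, so the Bayes optimal classifier is undetermined on exactly the feature vectors we care about, and we have not yet forced $h$ to track $x_2$.

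To break the tie in the desired direction I would add a small amount of $y=-1$ mass at these two feature vectors, carrying $a=0$ at $(0,0)$ and $a=1$ at $(1,1)$ (e.g.\ equal weight on $(0,0,0,-1)$ and $(1,1,1,-1)$). Because the attribute classifier sees only the $y$-marginalized conditional $\mathbb{P}_Q(a\mid x_1,x_2)$, this extra mass makes $\mathbb{P}_Q(a=1\mid 0,0)<\tfrac12$ and $\mathbb{P}_Q(a=1\mid 1,1)>\tfrac12$, so $h(0,0)=0$ and $h(1,1)=1$; that is, $h$ agrees with $x_2$ on every feature vector occurring with $y=1$, and its values on the unused vectors $(0,1),(1,0)$ are irrelevant. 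Conditioning the $y=1$ points on $h$ then sorts them perfectly by $x_2$, yielding $\widehat\alpha=1$, $\widehat\beta=0$, and an estimated bias of $1$. All that remains is a routine normalization (e.g.\ weight $\tfrac16$ on each of the six atoms). The conceptual crux — and the step I expect to carry the real content — is precisely this tension: $a$ must be \emph{independent} of $x_2$ on the $y=1$ slice yet \emph{correlated} with $x_2$ in the full marginal the Bayes optimal classifier is fit to, a conflict resolved by hiding the correlating mass in the $y=-1$ slice where it cannot affect the true bias.
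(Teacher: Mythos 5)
Your construction is correct and is essentially the same as the paper's: both place mass $\tfrac16$ on four $y=1$ atoms covering all combinations of $(x_2,a)$ so that $a$ is independent of $x_2$ on the positive slice (true bias $0$), and then add two negative-label atoms correlating $a$ with $x_2$ so that the Bayes optimal attribute predictor becomes $\hat a = x_2$, yielding $\widehat\alpha=1$, $\widehat\beta=0$. The only cosmetic difference is that the paper fixes $x_1\equiv 1$ while you set $x_1=x_2$; the tie-breaking mechanism and the resulting distribution are otherwise identical.
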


Next, we show that even in practical settings, there is little correlation between the accuracy of the attribute classifier and its effectiveness for bias estimation. Table~\ref{tab:model_selection} contains the result of various attribute classifiers trained on the UCI Adult dataset \cite{Dua:2019}. As can be seen, the accuracy of an attribute classifier is not necessarily correlated with its ability to estimate the true bias of a model. 

\begin{table}[t]
\centering
\begin{tabular}{|c|c|c|c|}
\hline
    Attribute Classifier & Test Accuracy  & $\hat{\alpha}-\hat{\beta}$ & $\alpha-\beta$\\
     \hline
     \hline
    Random Forest & 84.46\% &  -0.1243 &  -0.1149 \\
    \hline
    \hline
    Logistic Regression (LR) & 84.29\% &  -0.1450 &  -0.1149\\
    \hline
    LR (M+O+R) & 82.68\% &  -0.1200 & -0.1149\\
    \hline
    LR (W+M+O+R) & 83.43\% & -0.1454  & -0.1149\\
    \hline
    LR (W+E+M+O+R) & 83.43\% &  -0.1352 &   -0.1149\\
    \hline
    \hline
    SVM & 83.80\% &  -0.2045 &  -0.1149\\
    \hline
    \hline
    1-hidden-layer NN & 83.35\% &  -0.1025 &  -0.1149\\
    \hline
\end{tabular}
\vspace{0.05in}
\caption{Distribution of test accuracies of different attribute (Gender) classifiers, and the resulting estimated biases, over the UCI Adult dataset. $\hat{\alpha}-\hat{\beta}$ is the estimated bias using the attribute classifier and $\alpha - \beta$ is the true bias~(without the absolute value).
The label classifier is a fixed Random Forest Classifier to predict income, with $85.79\%$ test accuracy. 
By default all models are trained with all features except Gender and Income. For the Logistic Regression (LR) model, the names within the bracket indicate which features are used: R: ``Relationship'', E: ``Education", M: ``MaritalStatus", O: ``Occupation", W: ``WorkClass".}
\label{tab:model_selection}
\vspace{-0.2in}
\end{table}

In order to gain further understanding, we consider a simplified model where we aim to theoretically characterize the structure of the optimal attribute classifier. In particular, we will consider the setting where the two datasets $D_1$ and $D_2$ are drawn from distributions which are marginals of the same distribution. For an attribute classifier $h: \mathcal{X} \rightarrow \mathcal{A}$, define the conditional errors of $h$ with respect to $a$ as  
\begin{align}\label{eq:cond-error-def}
g_1 &= \mathbb{P}(h(x) \neq a | a=0, y=1),\\
g_2 &= \mathbb{P}(h(x) \neq a | a=1, y=1).
\end{align}

Here $(x,a)$ is drawn from the marginal of joint distribution over $\mathcal{X}  \times \mathcal{Y} \times \mathcal{A}$ from which both $D_1$ and $D_2$ are drawn. Suppose we use $h$ to predict the noisy attributes $\hat{a}$ on $D_2$, and then use these estimates to measure the bias of a label classifier $f$. To simplify our analysis we make the following key assumption. Below we denote $\hat{y}$ to represent the prediction of the label classifier $f$.

\noindent \textbf{Assumption I:} Given $y$ and $a$, $\hat{y}$ and $\hat{a}$ are conditionally independent, 
that is for all $y_1,y_2\in\{-1,+1\}$ and $a_1,a_2\in\{0,1\}$, we have
\begin{align}
    \label{eq:cond_ind_assumption}
    &\mathbb{P}(\hat{y} = y_1,\hat{a}=a_1| y=y_2, a=a_2) \nonumber\\  &=\mathbb{P}(\hat{y} = y_1| y=y_2, a=a_2)\cdot \mathbb{P}(\hat{a}=a_1| y=y_2, a=a_2).
\end{align}
Notice that while the above assumption is restrictive, there are practical scenarios where one can expect it to hold. Furthermore, the assumption has been studied in prior works to understand noise sensitivity of post-processing and in-processing methods for building fair classifiers \cite{lamy2019,awasthi2019equalized}. Two settings where such an assumption would hold are as follows. First, this would hold if the attribute classifier uses a set of features that are conditionally independent of the features used by the label classifier. The  assumption will also hold  if the attribute classifier makes independent errors with a certain probability. This, for instance, might be the case for certain classifiers based upon crowdsourcing. As we will soon see, even under the above simplified setting, the structure of the optimal attribute classifier can defy convention wisdom. Using the attribute classifier $h$, we get estimates $\hat{\alpha}, \hat{\beta}$ of the true values $\alpha, \beta$~(see Eq.~\eqref{eq:def_true_alpha}), where 
\begin{align*}
   \hat{\alpha} &= \mathbb{P} (f(x)=1 | y=1, \hat{a}=1),\\
   \hat{\beta} &= \mathbb{P} (f(x)=1 | y=1, \hat{a}=0).
\end{align*}
Under Assumption I, we have the following relationship between the true and the estimated values 
\begin{align}
\label{eq:noisy-estimates}
\begin{split}
    \hat{\alpha} &=
    \frac{\alpha \mathbb{P}(y=1,a=1)(1-g_2) + \beta \mathbb{P}(y=1,a=0)g_1}{\mathbb{P}(y=1,a=1)(1-g_2) + \mathbb{P}(y=1,a=0)g_1},  \\
    \hat{\beta} &=
        \frac{\alpha \mathbb{P}(y=1,a=1)g_2 + \beta \mathbb{P}(y=1,a=0)(1-g_1)}{\mathbb{P}(y=1,a=1)g_2 + \mathbb{P}(y=1,a=0)(1-g_1)}.
        \end{split}
\end{align}

To see the above we have
\begin{align*}
    \mathbb{P} (f(x)=1 | y=1, \hat{a}=1) &= \frac{A+B}{C+D},
\end{align*}
where
\begin{align*}
    A &= \mathbb{P}(f(x)=1 | y=1, a=1,\hat{a}=1) \mathbb{P}(\hat{a}=1|y=1,a=1)\mathbb{P}(y=1,a=1),\\
    B &= \mathbb{P}(f(x)=1 | y=1, a=0,\hat{a}=1) \mathbb{P}(\hat{a}=1|y=1,a=0)\mathbb{P}(y=1,a=0),\\
    C &= \mathbb{P}(\hat{a}=1|y=1,a=1)\mathbb{P}(y=1,a=1),\\
    D &= \mathbb{P}(\hat{a}=1|y=1,a=0)\mathbb{P}(y=1,a=0).
\end{align*}
The above can be rewritten as
\begin{align*}
    \mathbb{P} (f(x)=1 | y=1, \hat{a}=1) 
    &= \frac{(1-g_2)\mathbb{P}(f(x)=1 | y=1, a=1,\hat{a}=1) \mathbb{P}(y=1,a=1)}{(1-g_2)\mathbb{P}(y=1,a=1)+ g_1\mathbb{P}(y=1,a=0)}\\
    &+\;\;\;\;
     \frac{g_1\mathbb{P}(f(x)=1 | y=1, a=0,\hat{a}=1)\mathbb{P}(y=1,a=0)}{(1-g_2)\mathbb{P}(y=1,a=1) + g_1\mathbb{P}(y=1,a=0)},
\end{align*}
with $g_1,g_2$ defined as in \eqref{eq:cond-error-def}. 

Under Assumption I we have 
\[\mathbb{P}(f(x)=1 | y=1, a=1,\hat{a}=1)=\alpha
\]
and
\[
\mathbb{P}(f(x)=1 | y=1, a=0,\hat{a}=1)=\beta,
\]
and hence
\begin{align*}
    \mathbb{P} (f(x)=1 | y=1, \hat{a}=1)
    &=\frac{(1-g_2)\alpha \mathbb{P}(y=1,a=1)+g_1\beta\mathbb{P}(y=1,a=0)}{(1-g_2)\mathbb{P}(y=1,a=1) + g_1\mathbb{P}(y=1,a=0)}.
\end{align*}
Similarly, we obtain
\begin{align*}
    \mathbb{P} (f(x)=1 | y=1, \hat{a}=0) 
    &=\frac{g_2 \alpha  \mathbb{P}(y=1,a=1)  + (1-g_1)\beta\mathbb{P}(y=1,a=0)}{g_2\mathbb{P}(y=1,a=1)+ (1-g_1)\mathbb{P}(y=1,a=0)}.
\end{align*}
It follows that
\begin{align*}
    |\hat{\alpha} - \hat{\beta}| &= |\mathbb{P}(f(x)=1 | y=1, \hat{a}=1) -\mathbb{P} (f(x)=1 | y=1, \hat{a}=0)|\\
    &=\frac{|1-g_1-g_2|}{\big(\frac s r (1-g_1) + g_2\big)\big(\frac r s (1-g_2) + g_1 \big)} \cdot |\alpha - \beta|, 
\end{align*}
where
\begin{align*}
    r = \mathbb{P}(y=1,a=1), \qquad
    s = \mathbb{P}(y=1,a=0).
\end{align*}
A simple calculation then shows that the true bias and the estimated bias satisfy 
\begin{align}
\label{eq:bias_correction_formula}
     |\hat{\alpha} - \hat{\beta}| &= \gamma |\alpha - \beta|,
\end{align}
where the {\em distortion factor} $\gamma$ is defined as
    \begin{align}\label{eq:def_E}
    \gamma = \frac{|1-g_1-g_2|}{\big(\frac s r (1-g_1) + g_2\big)\big(\frac r s (1-g_2) + g_1 \big)} 
    \end{align}
    with larger values of $\gamma$ corresponding to higher accuracy estimates of bias,
    and the quantities $r,s$ are defined as
    \begin{align*}
    r = \mathbb{P}(y=1,a=1), \qquad  s = \mathbb{P}(y=1,a=0).
\end{align*}
We refer to $r/s$ as the {\em ratio of base rates} and assume that it is bounded in $(0,G)$ for some finite $G$, else there will be no good way to estimate the bias from a finite sample. Notice that if $g_1+g_2$ equals one, then $\gamma$ is zero. In every other case 
it is not hard to see that $\gamma \in [0,1]$.
\begin{remark}
Under Assumption I, $|\hat{\alpha} - \hat{\beta}|$ is an \emph{underestimate} of the bias of $f$, and is therefore not an unbiased estimator. This is evident from our derivation, but may not be obvious for someone using empirical value as a stand-in for the true value of bias.
\end{remark}
Hence, even in the simplified setting of Assumption I, the distortion in the estimated bias depends on the conditional errors of $h$, as well as the ratio of base rates $r/s$. Therefore, to accurately estimate the true bias one not only needs the conditional errors of $h$ but also an estimate of 
the ratio of base rates. 
Under Assumption I an optimal attribute classifier should aim to maximize $\gamma$ as defined in \eqref{eq:def_E} (since $\gamma \in [0,1]$). Our next theorem quantifies the structure of the optimal attribute~classifier.
\begin{theorem}
\label{thm:opt-classifier}
Assume that the distribution $P$ is such that the ratio of base rates equals one, i.e., $r=s$.  Furthermore, denote an attribute classifier $h$ as a tuple $(g_1, g_2)$ where $g_1$ and $g_2$ denote the conditional errors as defined in \eqref{eq:cond-error-def}. Then, under a given error budget $U$, i.e., 
$\mathbb{P}(h(x)\neq a, y=1)=U$,
the only global maximizers of $\gamma$ as defined in \eqref{eq:def_E} are the attribute classifiers 
$(0,{U}/{r})$ and $(U/r,0)$ (if $U\leq r$) or $({U}/{r}-1,1)$ and $(1,{U}/{r}-1)$ (if $U\geq r$).
\end{theorem}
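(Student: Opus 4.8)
The plan is to reduce the constrained maximization to a one-variable problem by exploiting the structure of $\gamma$ under the hypothesis $r=s$. First I would substitute $s/r = r/s = 1$ into the definition \eqref{eq:def_E}, which collapses the denominator to a product of the form $(1-g_1+g_2)(1+g_1-g_2)$. Recognizing this as a difference of squares, it equals $1-(g_1-g_2)^2$, so that
\begin{align*}
\gamma = \frac{|1-g_1-g_2|}{1-(g_1-g_2)^2}.
\end{align*}
The key observation is that $\gamma$ depends on the classifier $(g_1,g_2)$ only through the two symmetric quantities $g_1+g_2$ and $(g_1-g_2)^2$.

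Next I would translate the error budget into a constraint on $g_1+g_2$. Decomposing the event $\{h(x)\neq a, y=1\}$ over $a\in\{0,1\}$ and using the definitions \eqref{eq:cond-error-def} gives $\mathbb{P}(h(x)\neq a, y=1) = g_1 s + g_2 r$, which under $r=s$ equals $r(g_1+g_2)$. Hence the budget $\mathbb{P}(h(x)\neq a, y=1)=U$ is exactly the linear constraint $g_1+g_2 = U/r =: t$. In particular, the numerator $|1-t|$ of $\gamma$ is \emph{constant} over the feasible set, so maximizing $\gamma$ is equivalent to minimizing the denominator, i.e., to maximizing $(g_1-g_2)^2$ subject to $g_1+g_2=t$ and $g_1,g_2\in[0,1]$.

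The remaining step is this elementary constrained optimization over the line segment $\{(g_1,g_2): g_1+g_2=t,\ g_1,g_2\in[0,1]\}$. Maximizing $|g_1-g_2|$ pushes the solution to the endpoints of the segment: when $t\le 1$ the segment runs from $(0,t)$ to $(t,0)$, giving the maximizers $(0,U/r)$ and $(U/r,0)$; when $t\ge 1$ the box constraint truncates it to run from $(t-1,1)$ to $(1,t-1)$, giving $(U/r-1,1)$ and $(1,U/r-1)$. To promote this from ``these attain the maximum of $|g_1-g_2|$'' to ``these are the \emph{only} global maximizers of $\gamma$'', I would note that for fixed $t\neq 1$ the map $(g_1-g_2)^2\mapsto \gamma$ is strictly increasing (the numerator is a positive constant while the denominator is positive and strictly decreasing in $(g_1-g_2)^2$), so the $\gamma$-maximizers coincide exactly with the $|g_1-g_2|$-maximizers, which are precisely the two endpoints.

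The main obstacle I anticipate is bookkeeping at the boundary $t=1$ (i.e.\ $U=r$): there the numerator $|1-t|$ vanishes, so $\gamma\equiv 0$ on the whole feasible set and the ``unique maximizers'' statement degenerates; I would flag this as the non-generic case and verify that away from it the strict-monotonicity argument cleanly yields uniqueness. A secondary point requiring care is confirming denominator positivity (equivalently $|g_1-g_2|<1$) at the claimed optima, so that $\gamma$ is finite and the monotonicity comparison is valid; this holds because at the endpoints $|g_1-g_2|=\min(t,2-t)<1$ exactly when $t\neq 1$.
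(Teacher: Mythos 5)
Your proposal is correct and follows essentially the same route as the paper's proof: simplify $\gamma$ to $|1-g_1-g_2|/(1-(g_1-g_2)^2)$ under $r=s$, note the budget fixes $g_1+g_2=U/r$ so the numerator is constant, and push $(g_1-g_2)^2$ to the endpoints of the feasible segment. Your extra care about the degenerate case $U=r$ (where $\gamma\equiv 0$ and every feasible classifier is trivially a maximizer) and about strict monotonicity for uniqueness is a genuine refinement that the paper's proof passes over silently.
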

\begin{proof}
In case of $r=s$, $\gamma$ as defined in \eqref{eq:def_E} 
simplifies to 
\begin{align*}
\gamma = \frac{|1-g_1-g_2|}{1-(g_1-g_2)^2},
\end{align*}
and the constraint 
$\mathbb{P}(h(x)\neq a, y=1)=\mathbb{P}(a=0, y=1)g_1 + \mathbb{P}(a=1, y=1)g_2=U$ is equivalent to 
$g_1 + g_2=
\frac{U}{r}$.
\vspace{5mm}
 Our goal is to show that $\argmax_{(g_1,g_2)\in[0,1]^2:\, g_1 + g_2=
\frac{U}{r}} \Big( \gamma \Big)$ equals
\begin{align*}
\{(\max\{{U}/{r}-1,0\},\min\{{U}/{r},1\}),(\min\{{U}/{r},1\},\max\{{U}/{r}-1,0\})\}.
\end{align*} 

\vspace{2mm}
Writing $g_2$ in terms of $g_1$, we get that
$$
g_2 = \frac{U}{r}-g_1
$$
and 
$$
\gamma(g_1) = \frac{|1- \frac{U}{r}|}{1-(2g_1 - \frac{U}{r})^2}.
$$
The constraint $(g_1,g_2)\in[0,1]^2$ is 
equivalent to 
\[g_1\in[\max\{\frac{U}{r}-1,0\},\min\{\frac{U}{r},1\}].
\]
We assume that $\frac{U}{r}\leq 2$. Maximizing $\gamma(g_1)$ is equivalent to maximizing $(2g_1-\frac{U}{r})^2$, and the maximum of the latter function is attained both at $g_1=\max\{\frac{U}{r}-1,0\}$ and $g_1=\min\{\frac{U}{r},1\}$, which correspond to $g_2=\min\{\frac{U}{r},1\}$  and $g_2=\max\{\frac{U}{r}-1,0\}$, respectively.
\end{proof}


\begin{figure*}[t]
\centering
    \includegraphics[width=2in]{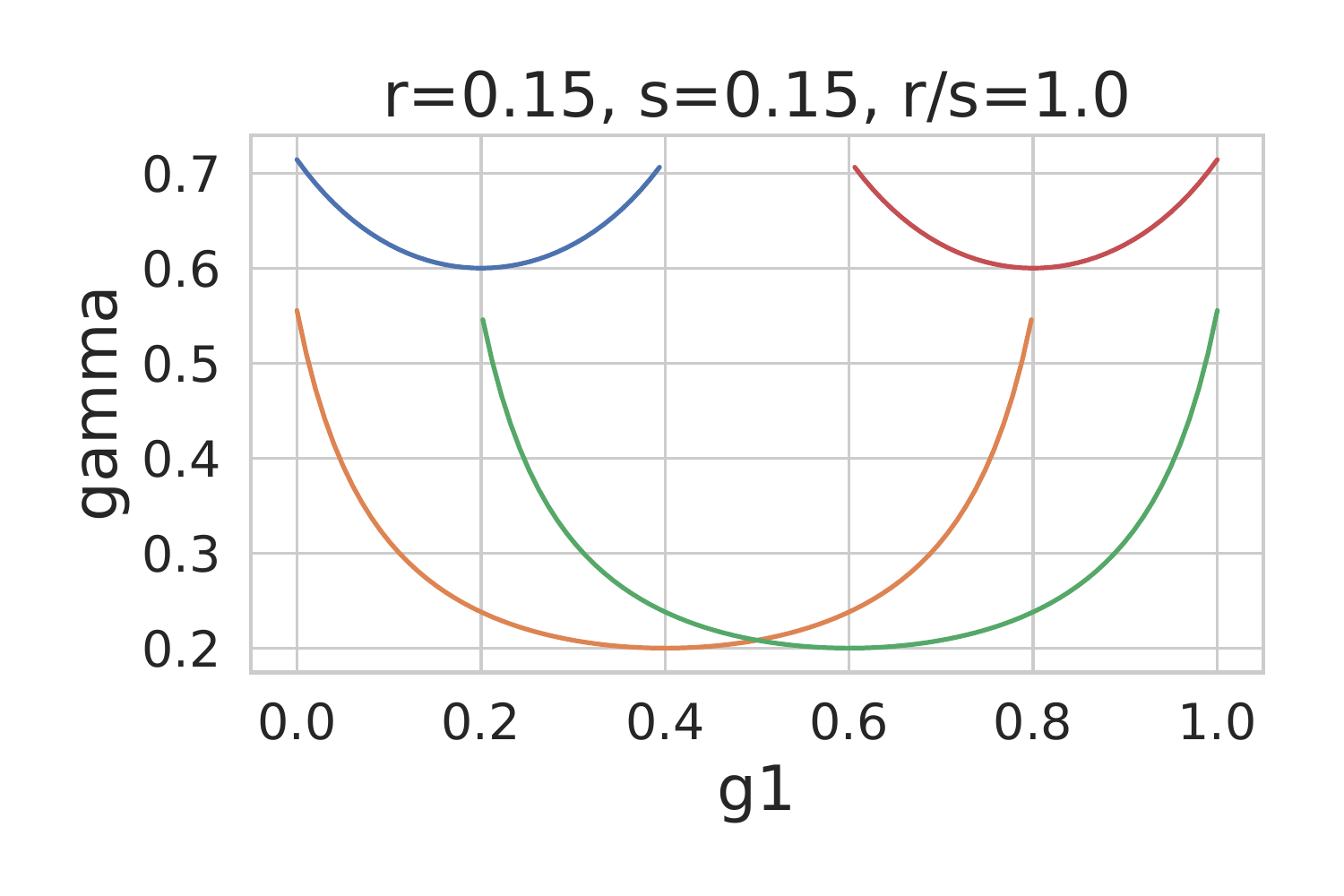}
    \includegraphics[width=2in]{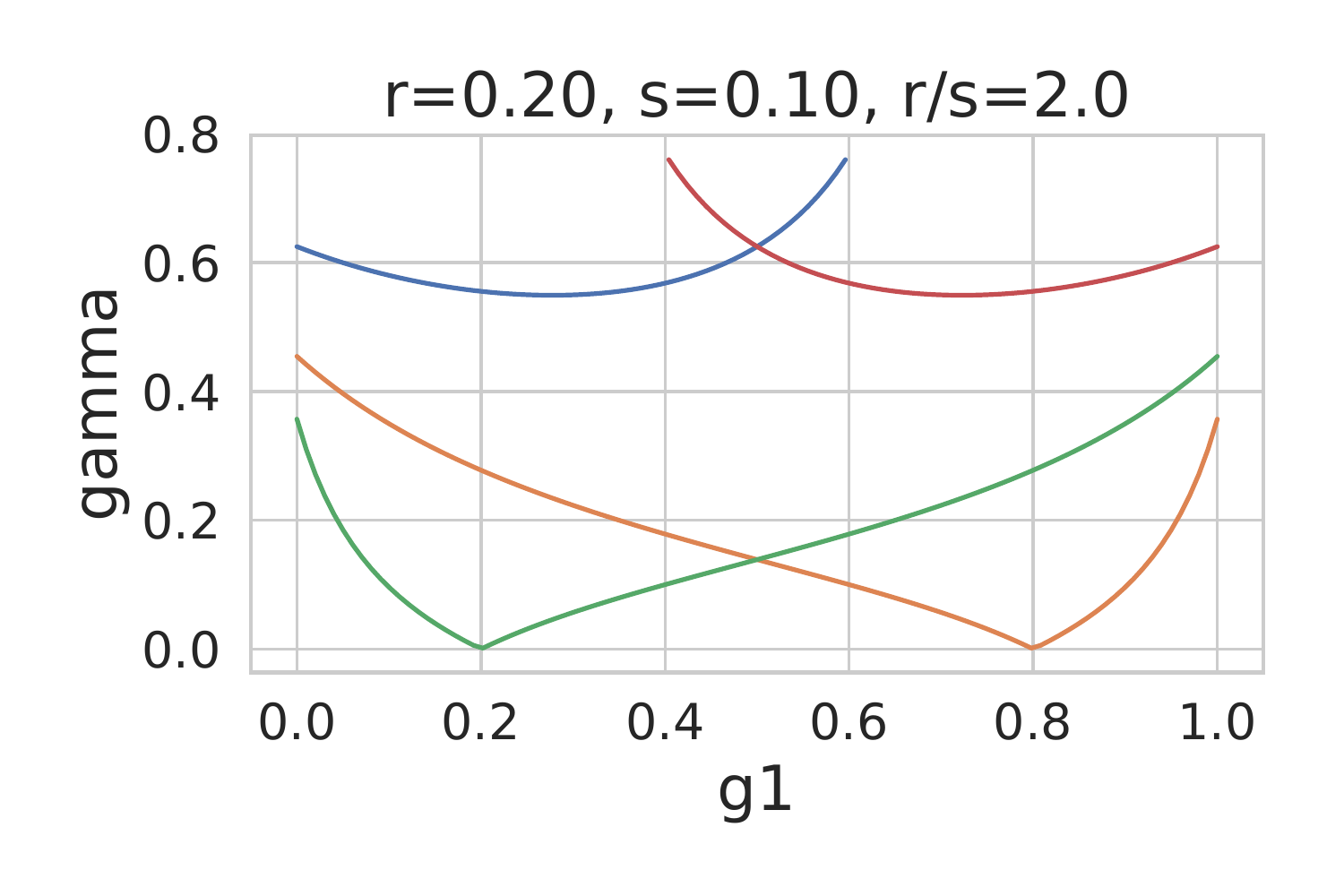}
    \includegraphics[width=2.65in]{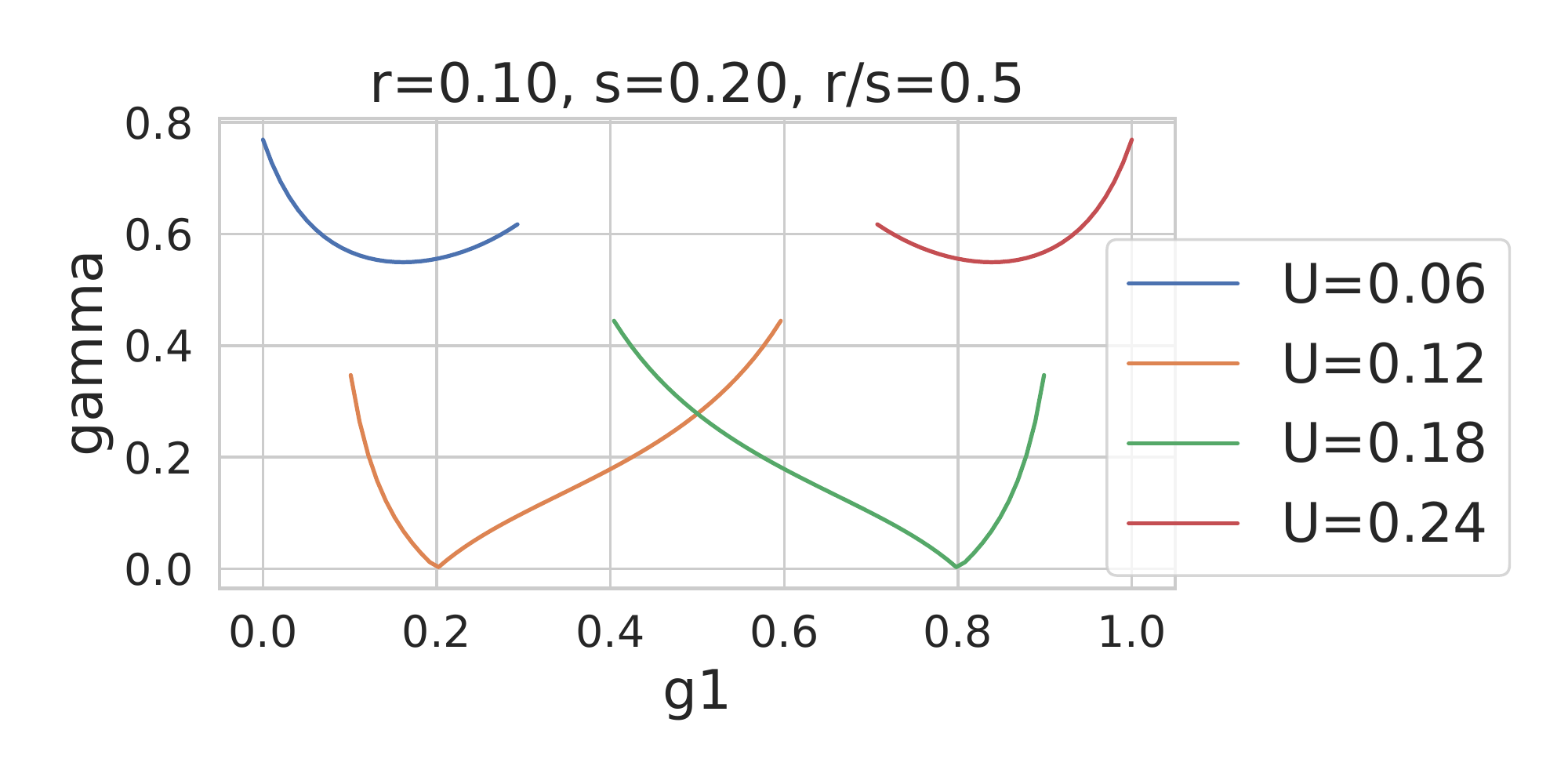}
    \vspace{-0.15in}
    \caption{The distortion factor~$\gamma$ as a function of $g_1$ for  different $r/s$ and under different error budgets $U=sg_1+rg_2$ for the attribute classifier. Apparently, $\gamma$ always attains its maximum either at the smallest or the largest possible value of $g_1$.}
    \label{fig:gamma_plot_w_base_rate}
\end{figure*}

\begin{figure*}[t]
     \centering
     \includegraphics[width=4.7in]{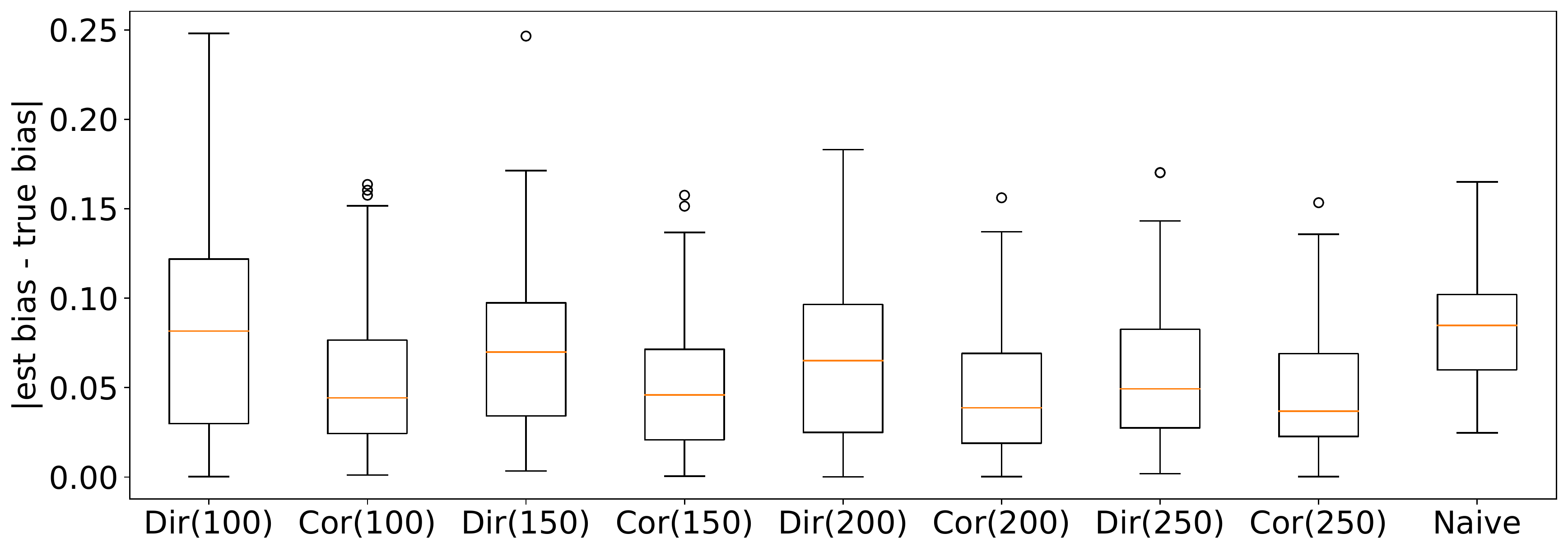}
     \caption{Experiment on the FIFA 20 player 
     dataset---independence 
     assumption~\eqref{eq:cond_ind_assumption} 
     holds. 
     Absolute difference between the true bias and the estimated bias for direct (Dir) and 
     corrected~(Cor) estimation, when  
     100 to 250 
     points of common data over $(y,a)$ are available, 
     and for naive (Naive) estimation. }
     \label{fig:exp_fifa_dataset}
 \end{figure*}

The above theorem implies that even under simplifying assumptions, the structure of the optimal attribute classifier is counter-intuitive: to estimate bias as accurately as possible, one may want to distribute the errors of the given attribute classifier as unevenly as possible! Hence, great care must be taken when designing the attribute classifier for the purpose  of bias estimation. Furthermore, while
we only consider the case $r=s$, 
Figure~\ref{fig:gamma_plot_w_base_rate} shows the distortion factor~$\gamma$  as a function of $g_1$ for a fixed error budget $U$ also when $r\neq s$ (middle and right plot). As we can see, also then 
$\gamma$ attains its maximum either at the smallest or the largest possible value of $g_1$, which corresponds to distributing the error among the two groups as unevenly as possible. 


\section{Perspective of the User of an Attribute Classifier}
\label{sec:experiments}


If the user of the attribute classifier expects Assumption~I as defined in Eq.~\eqref{eq:cond_ind_assumption} to (approximately) hold and she has some 
common data over $(y, a)$ 
from which she will  estimate the conditional errors~$g_1,g_2$ and the ratio of base rates~$r/s$, 
exploiting 
\eqref{eq:bias_correction_formula}, she can try to improve the naive bias estimate $|\hat{\alpha}-\hat{\beta}|$ by dividing it by an estimate of the distortion factor~$\gamma$. In this section, we present an experiment that illustrates that such an approach indeed yields a better estimate of the true bias~$|\alpha-\beta|$. 
In a data scarce regime, 
this approach 
also compares favorably to using the available data for directly estimating $|\alpha-\beta|$.

\vspace{1mm}
\textbf{Dataset.~~}
We use the FIFA~20 player dataset\footnote{\url{https://www.kaggle.com/stefanoleone992/fifa-20-complete-player-dataset}}.
We
predict whether a soccer player's wage is above ($y=+1$) or below ($y=-1$) the median wage based on the player's age and their \emph{Overall} attribute. For doing so we train a one-hidden-layer NN. We restrict the dataset to contain only players of English or German nationality (leaving us with 2883 players) and consider nationality as sensitive attribute. We train an LSTM \cite{lstm} to predict this attribute from a player's name. 
In this case we can expect the conditional independence assumption~\eqref{eq:cond_ind_assumption} to hold since given a player's wage and nationality, their name should be conditionally independent of age and \emph{Overall} attribute.
Indeed, the measure proposed in \cite{awasthi2019equalized} 
is small enough (see below) as to confirm that our expectation holds and the independence assumption is satisfied.

\begin{figure*}[t]
    \centering
    \includegraphics[width=5in]{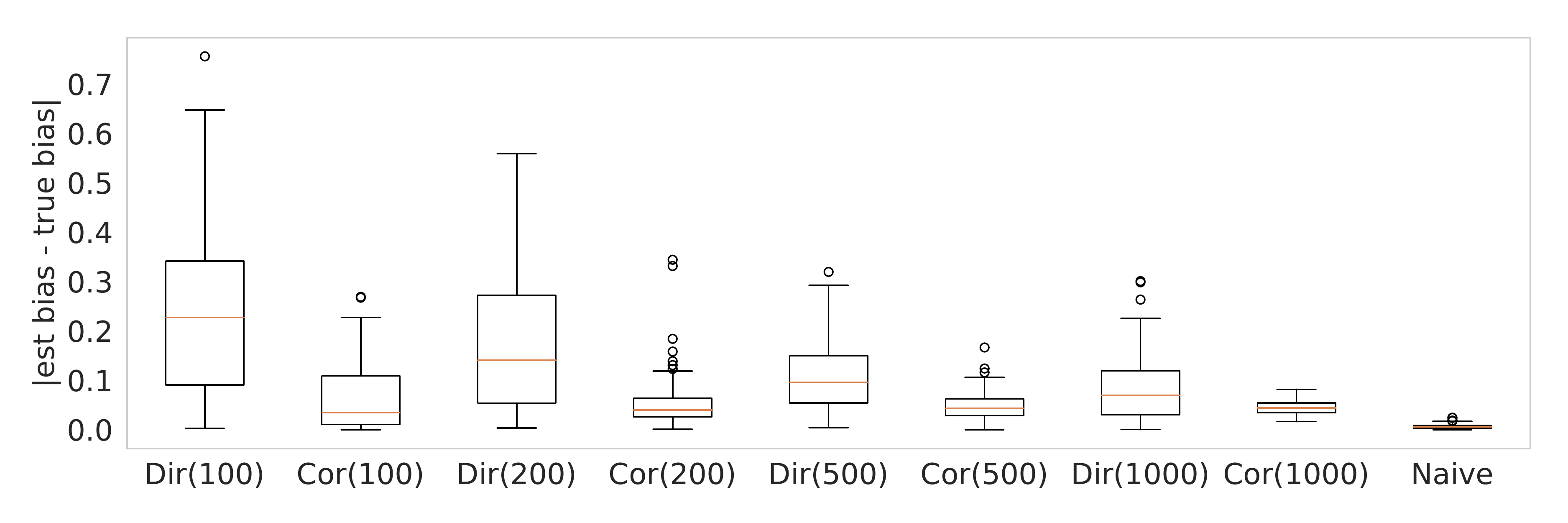}
     \caption{Experiment on the UCI Adult 
     dataset---independence 
     assumption~\eqref{eq:cond_ind_assumption} is violated.  Absolute difference between the true bias and the estimated bias for direct (Dir) and 
     corrected~(Cor) estimation, when 
     100 to 1000 
     points of common data over $(y,a)$ are available, 
     and for naive (Naive)~estimation.}
     \label{fig:adult_bias_est}
\end{figure*}

\vspace{1mm}
\textbf{Experimental setup.~~} We randomly split the dataset into three batches of sizes 1500, 1133 and 250, respectively. We use the first batch to train the label and the attribute classifier. On the second batch, we compute both the true bias~$|\alpha-\beta|$ and the \emph{naive estimate}~$|\hat{\alpha}-\hat{\beta}|$. Finally, we use $n\in\{100,150,200,250\}$ points from the third batch to estimate $g_1,g_2,r,s$, and thus $\gamma$, and also to directly estimate $|\alpha-\beta|$. We refer to the latter estimate as \emph{direct estimate}. Exploiting~\eqref{eq:bias_correction_formula}, we use the estimate of $\gamma$ to correct the naive estimate~$|\hat{\alpha}-\hat{\beta}|$ by dividing it by the estimate of $\gamma$. We refer to the 
result 
as \emph{corrected estimate}. 

Figure~\ref{fig:exp_fifa_dataset} shows the absolute difference between the true bias and the various estimates, where for the direct estimate and the corrected estimate we show the results depending on $n$, the amount of data for which we have both $y$ and $a$. The boxplots are obtained from running the experiment  100 times. 
We can see that the corrected estimate significantly improves over the naive estimate (e.g., the mean absolute difference, averaged over the 100 runs, is 0.0831 for the naive estimate and 0.0461 for the corrected estimate 
with 
$n=250$). 
Furthermore, the corrected estimate consistently outperforms the direct estimate. When 
only $n=100$ points of common data over $(y,a)$ 
are used, 
the mean absolute difference for the corrected estimate is 0.054 while for the direct estimate it is 0.0837. 
%
In this experiment, the mean true bias 
is 0.14, the mean error of the label and the attribute classifier is 0.18 and 0.25, respectively, and the mean violation of the independence assumption according to the measure proposed in \cite{awasthi2019equalized} is $0.017$.

\subsection{An Active Sampling based Algorithm for Bias Estimation in General Settings}
\label{sec:active_sampling}
In some real-world scenarios the conditional independence assumption might be violated, and we would like to ask, in those cases, does the correction as specified in Eq.~\eqref{eq:bias_correction_formula} still give a good estimate of the true bias? 
We show in the following that the correction might not always lead to a better bias estimate when Assumption I does not hold.
Therefore, in order to obtain a good bias estimate in general, we explore active-sampling strategies that aim to use as little common data over $(y,a)$ as possible, ideally comparable to the setting when the independence assumption holds.

\vspace{1mm}
\textbf{Dataset.~~} We use the UCI Adult dataset\footnote{\url{https://archive.ics.uci.edu/ml/datasets/Adult}}, which comes divided into a training and a test set.
On the training set, we train Random Forest classifiers for both predicting the label (income: $>50K$ or not, using all features present in the dataset except gender) and the attribute (gender: ``Male'' or ``Female'' as categorized in the dataset, using all features except income). 
We used the measure proposed in \cite{awasthi2019equalized} to evaluate 
whether the independence assumption~\eqref{eq:cond_ind_assumption} (approximately) holds and found that 
the independence assumption clearly fails to hold. 

\paragraph{Inaccurate bias estimation when the conditional independence assumption is violated.}
We use the UCI Adult training set for training our label classifier and attribute classifier. For the Adult test set (around 16,000 examples), we hold out a set with $2,000$ examples for estimating $g_1, g_2, r, s$ since it requires some common data over $(y, a)$, and use the rest for estimating the true bias ($|\alpha-\beta|$).
In Figure~\ref{fig:adult_bias_est}, we show the absolute difference between the estimated bias and the true bias.
For the estimated bias, we use the same setting as the previous experiment, with direct estimate, corrected estimate, and naive estimate.
We can see the the correction gives a more accurate estimate than direct estimation, however, the naive estimate remains to be the one with the most accurate estimate.

\paragraph{Active sampling algorithm.}
\label{sec:active}
We next address the limitations of the approaches explored above.
In particular, we propose a \emph{general active-sampling} based strategy that the label classifier can use to accurately estimate the bias, using as little common data over $(y,a)$ as possible, given the attribute classifier. Furthermore, we will make \emph{no assumption} about the conditional independence or the structure of the attribute classifier.

In the general setting the estimated bias and the true bias are related as 
follows:
\begin{theorem}
\label{thm:opt-classifier-general}
The true bias $\alpha-\beta$ can be derived from $\hat{\alpha}, \hat{\beta}$ as:
\begin{align}
\alpha-\beta 
&=\frac{\hat{\alpha}(\frac{s}{r}g_1+1-g_2)(1-\delta_1+\frac{r}{s}\delta_2)}{1-\delta_1-\delta_2} \nonumber \\ 
&- 
\frac{\hat{\beta}(1-g_1+\frac{r}{s}g_2)(1+\frac{s}{r}\delta_1-\delta_2)}{1-\delta_1-\delta_2},
\label{eq:true-bias-updates}
\end{align}
where 
\begin{align}
\label{eq:delta-def}
    \delta_1 &= \mathbb{P}(\hat{a}=1 | f(x)=1, a=0, y=1), \nonumber \\
    \delta_2 &= \mathbb{P}(\hat{a}=0 | f(x)=1, a=1, y=1),
\end{align}
and $g_1, g_2$ are as defined in Eq.~\eqref{eq:cond-error-def}. 
\end{theorem}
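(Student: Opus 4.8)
The plan is to reproduce the derivation already carried out in the excerpt for the Assumption~I case, but \emph{without} the step that replaced $\mathbb{P}(f(x)=1\mid y=1,a=i,\hat a=j)$ by $\alpha$ or $\beta$. Since conditional independence no longer holds, the output $f(x)=\hat y$ and the predicted attribute $\hat a$ are correlated given $(y,a)$, and that correlation is exactly what $\delta_1,\delta_2$ of \eqref{eq:delta-def} are designed to capture. So I would track the joint conditional probabilities $\mathbb{P}(f(x)=1,\hat a=j\mid y=1,a=i)$ directly and never appeal to \eqref{eq:cond_ind_assumption}.

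First I would expand each estimate by conditioning on the true attribute. Writing $r=\mathbb{P}(y=1,a=1)$, $s=\mathbb{P}(y=1,a=0)$ and the noisy marginals
\[
N_1=\mathbb{P}(\hat a=1,y=1)=(1-g_2)r+g_1 s,\qquad N_0=\mathbb{P}(\hat a=0,y=1)=g_2 r+(1-g_1)s,
\]
the definitions give $\hat\alpha\,N_1=\mathbb{P}(f(x)=1,\hat a=1,y=1)$ and $\hat\beta\,N_0=\mathbb{P}(f(x)=1,\hat a=0,y=1)$. Next I would split each of these joint events over $a\in\{0,1\}$ and apply the chain rule, factoring each cell as $\mathbb{P}(\hat a=j\mid f(x)=1,y=1,a=i)\cdot\mathbb{P}(f(x)=1\mid y=1,a=i)\cdot\mathbb{P}(y=1,a=i)$. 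By definition $\mathbb{P}(f(x)=1\mid y=1,a=1)=\alpha$, $\mathbb{P}(f(x)=1\mid y=1,a=0)=\beta$, and the attribute-flip probabilities conditioned on $f(x)=1$ are precisely $\delta_1,\delta_2$ and their complements, yielding the pair of relations
\[
\hat\alpha\,N_1=(1-\delta_2)\alpha r+\delta_1\beta s,\qquad \hat\beta\,N_0=\delta_2\alpha r+(1-\delta_1)\beta s.
\]

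I would then read this as a $2\times 2$ linear system in the unknowns $(\alpha r,\beta s)$ with coefficient matrix $\left(\begin{smallmatrix} 1-\delta_2 & \delta_1\\ \delta_2 & 1-\delta_1\end{smallmatrix}\right)$, whose determinant is exactly $1-\delta_1-\delta_2$. Solving by Cramer's rule expresses $\alpha r$ and $\beta s$ as linear combinations of $\hat\alpha N_1$ and $\hat\beta N_0$; dividing by $r$ and $s$, respectively, and subtracting gives $\alpha-\beta$. The final algebraic move is a regrouping: pulling $N_1/r=\tfrac{s}{r}g_1+1-g_2$ and $N_0/s=1-g_1+\tfrac{r}{s}g_2$ out front, and collecting the $\delta$-terms into $1-\delta_1+\tfrac{r}{s}\delta_2$ and $1+\tfrac{s}{r}\delta_1-\delta_2$, reproduces Eq.~\eqref{eq:true-bias-updates} verbatim.

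The only conceptually new ingredient relative to the earlier computation is the introduction of $\delta_1,\delta_2$, so the main obstacle is bookkeeping rather than insight: one must keep the four joint cells $\{f(x)=1,\hat a=j,y=1,a=i\}$ straight and perform the inversion so that the determinant $1-\delta_1-\delta_2$ and the asymmetric placement of the $r/s$ and $s/r$ factors emerge exactly as stated. A useful sanity check is that setting $\delta_i=g_i$ (the regime where Assumption~I forces $\hat a$ and $\hat y$ to decouple) collapses the matrix to the symmetric form, and in the special case $r=s$ the coefficient of $\hat\alpha$ becomes $\tfrac{1-(g_1-g_2)^2}{1-g_1-g_2}$ while that of $\hat\beta$ is its negative, recovering $\hat\alpha-\hat\beta=\gamma(\alpha-\beta)$ with $\gamma$ as in \eqref{eq:def_E}; this consistency with \eqref{eq:bias_correction_formula} confirms the grouping is correct.
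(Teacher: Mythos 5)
Your proposal is correct and follows essentially the same route as the paper's proof: both decompose $\mathbb{P}(f(x)=1,\hat a=j,y=1)$ over the true attribute $a$ to obtain the two linear relations $\hat\alpha N_1=(1-\delta_2)\alpha r+\delta_1\beta s$ and $\hat\beta N_0=\delta_2\alpha r+(1-\delta_1)\beta s$, and then solve for $\alpha-\beta$. The only (cosmetic) difference is that you invert the $2\times 2$ system via Cramer's rule and subtract, whereas the paper combines the two equations directly with the multipliers $(1-\delta_1+\tfrac{r}{s}\delta_2)$ and $(1+\tfrac{s}{r}\delta_1-\delta_2)$ to isolate $(1-\delta_1-\delta_2)(\alpha-\beta)$ in one step; your sanity check recovering \eqref{eq:bias_correction_formula} under Assumption~I is also valid.
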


\begin{figure*}[t]
\centering
\includegraphics[width=2.7in]{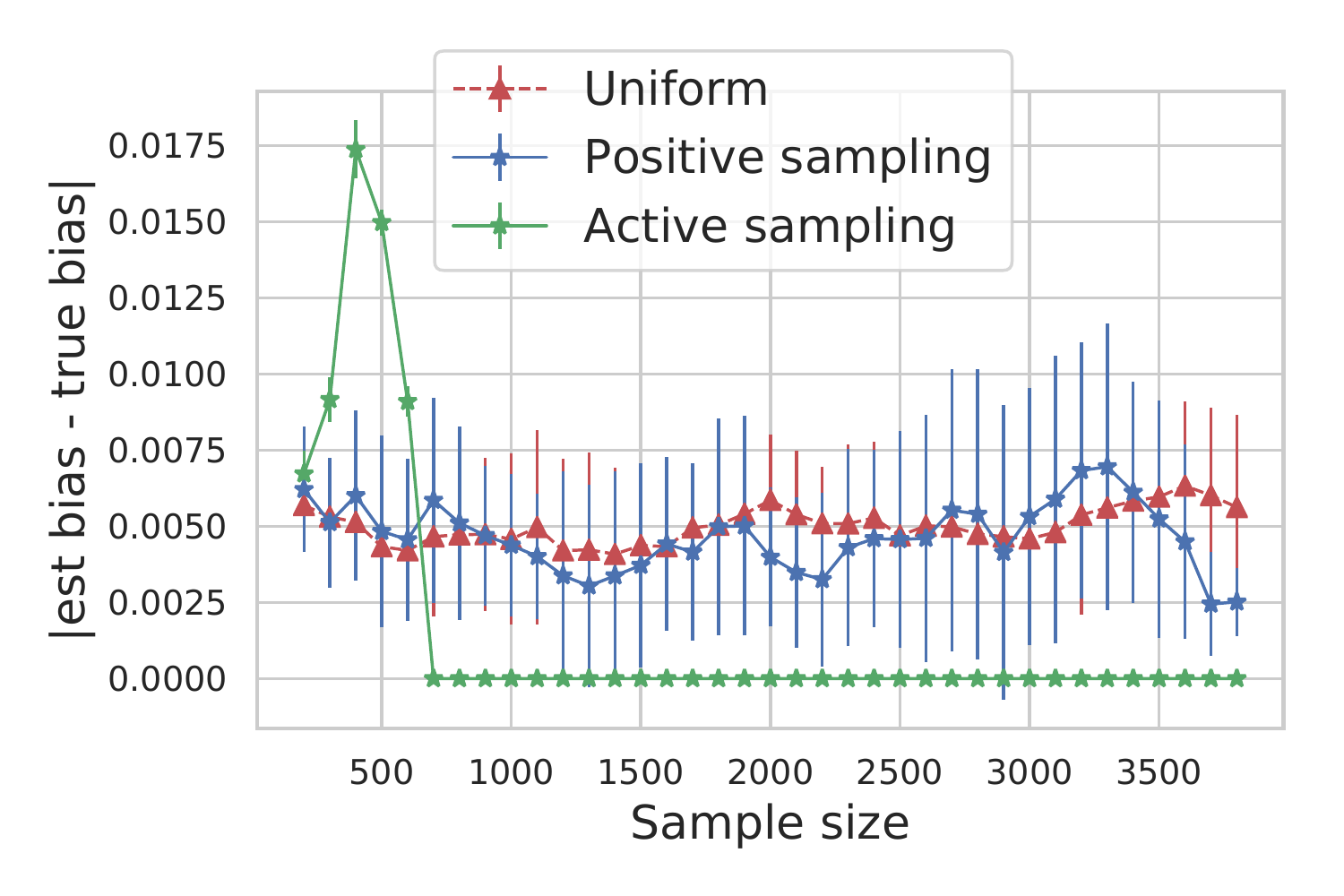}
\hspace{1.2cm}
\includegraphics[width=2.65in]{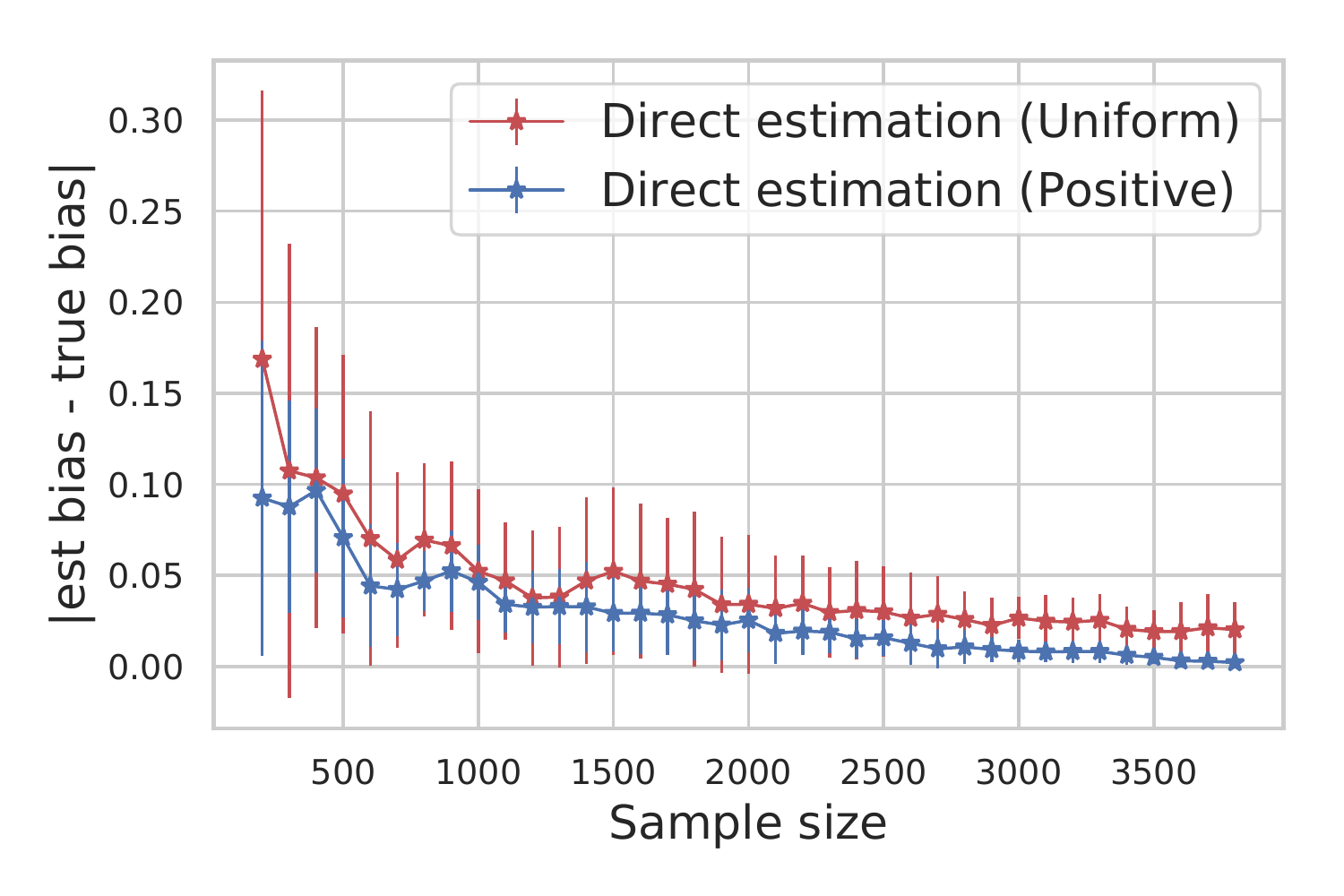}
    \caption{The absolute difference between estimated bias and the true bias, based on different \textbf{active sampling} strategies (left), and based on \textbf{direct estimation} over selected samples using uniform/positive sampling (right). Both are results averaged over $10$ runs with random initialization.}
    \label{fig:active_learning_bias_est}
\end{figure*}

\begin{algorithm}
\caption{Active Sampling}
\begin{algorithmic}[1]
\item[] \textbf{Input:} $D_2 = \{(x_1, y_1), \dots, (x_m, y_m)\}$, attribute classifier $h$, parameters $b, w$.
\item [] \textbf{Output:} Estimated bias $|\hat{\alpha} - \hat{\beta}|$, tolerance $\epsilon$.
\STATE Sample a set of $b$ examples uniformly at random from $D_2$ conditioned on $y=1$ and get their true attributes. Get estimates $\hat{r}$ and $\hat{s}$ using the sampled examples.
\STATE Initialize $t=0$ and $g^{(0)}_1, g^{(0)}_2, \delta^{(0)}_1, \delta^{(0)}_2$ to be zero.
\FOR{$t=1, \dots$}
\STATE Sample a batch of $b$ unlabeled~(i.e. no sensitive attribute information) examples uniformly at random from $D_2$ conditioned on $y=1$.
\STATE Sort the examples in ascending order according to the uncertainty of $h$, i.e., $|h(x)-0.5|$.
\STATE Obtain true attribute information the first $w$ examples and append to the labeled set of data. Compute $g^{(t+1)}_1, g^{(t+1)}_2, \delta^{(t+1)}_1, \delta^{(t+1)}_2$ on the labeled set using \eqref{eq:cond-error-def} and \eqref{eq:delta-def}. 
\STATE If $|g_1^{(t+1)} - g_1^{(t)}|, |g_2^{(t+1)} - g_2^{(t)}|, |\delta_1^{(t+1)} - \delta_1^{(t)}|, |g\delta_2^{(t+1)} - \delta_2^{(t)}|$ are all bounded by $\epsilon$ then \textbf{break}.
\ENDFOR
\STATE Output estimated bias obtained by using \eqref{eq:true-bias-updates} and the above estimates.
\end{algorithmic}
\label{alg:active-sampling}
\end{algorithm}

While $r,s$ can be estimated from a small amount of data, we explore sampling strategies for estimating the attribute classifier dependent quantities, namely, $g_1, g_2$ and $\delta_1, \delta_2$. Two natural sampling schemes are: 
(i) {\em Uniform sampling} where we uniformly sample a set of examples and get their true sensitive attribute information, and (ii) {\em Positive sampling}: where we perform uniform sampling on the subset of examples with $y=1$. We expect positive sampling to perform better than uniform sampling, since the quantities that we are estimating are conditioned on $y=1$. We compare the above two approaches with (iii) an {\em active sampling} approach that involves querying for true sensitive attributes~(conditioned on $y=1$) of examples on which the attribute classifier is the most uncertain (as described in Algorithm~\ref{alg:active-sampling}).
For each of the methods, in each iteration, we compute the estimated bias on the UCI Adult test set by using true values of $a$ over the selected samples, and the predicted values $\hat{a}$ from the attribute classifier for the unselected samples, using a sampling batch size of $100$. 
The results are shown in Figure~\ref{fig:active_learning_bias_est} (left), where we see that the uncertainty based active sampling approach requires significantly less common data to accurately estimate the~bias.

Furthermore, we also compare our approach with (iv) {\em Direct estimation}: where we directly estimate the bias~($|\alpha-\beta|$) based on the currently sampled set $S$, i.e., the small amount of common $(y, a)$ on the same test set, using a uniform sampling strategy and a positive sampling strategy. The result is shown in Figure~\ref{fig:active_learning_bias_est} (right). Note that the magnitude in the y-axis is much larger compared to the left figure, indicating that direct estimation using few samples tends to give less reliable estimation of the bias.




\section{Discussion and Conclusion}
We formalized and studied a commonly occurring scenario in fairness evaluation and auditing namely, the lack of common data involving both the class labels and sensitive attributes. Our experiments and theoretical analysis reveal that great care must be taken in designing attribute classifiers for the purpose of bias estimation. Furthermore, in certain scenarios, the structure of the optimal attribute classifier can be contradictory to natural criteria such as accuracy and fairness. From the perspective of the designer of the attribute classifier, maximizing the distortion factor as in \eqref{eq:def_E} is a challenging optimization problem. It would be interesting to explore efficient algorithms for solving this. Throughout our analysis we assume that the datasets $D_1, D_2$ are sampled from the same distribution. It would be interesting to extend our theory to the more realistic case of when the dataset distributions are different. Finally, it would also be interesting to provide theoretical guarantees for the active sampling scheme in Algorithm~\ref{alg:active-sampling}.





\bibliography{faact_arxiv}
\bibliographystyle{plainnat}


\appendix
\section{Appendix}

\begin{proof}[Proof of Theorem~\ref{thm:bayes-opt-example}]
Let the joint distribution $Q$ be as follows:
\begin{align*}
    \mathbb{P}(x_1=1, x_2=0, a=1, y=1) &= \frac{1}{6},\\
    \mathbb{P}(x_1=1, x_2=1, a=1, y=1) &= \frac{1}{6},\\
    \mathbb{P}(x_1=1, x_2=1, a=0, y=1) &= \frac{1}{6},\\
    \mathbb{P}(x_1=1, x_2=0, a=0, y=1) &= \frac{1}{6},\\
    \mathbb{P}(x_1=1, x_2=1, a=1, y=0) &= \frac{1}{6},\\
    \mathbb{P}(x_1=1, x_2=0, a=0, y=0) &= \frac{1}{6}.
\end{align*}
Then we have for $f=x_2$, 
\begin{align*}
    \mathbb{P}(f=1 | y=1, a=0) &= \mathbb{P}(x_2=1 | y=1, a=0)= \frac{1}{2}.
\end{align*}
Similarly,
\begin{align*}
    \mathbb{P}(f=1 | y=1, a=1) &= \mathbb{P}(x_2=1 | y=1, a=1)= \frac{1}{2}.
\end{align*}
Hence the true bias of $f$ is zero. Next consider the Bayes optimal predictor for $a$ that makes predictions based on $\mathbb{P}(a | x_1, x_2)$. If this probability is half, then we consider an arbitrary predictor for the Bayes optimal classifier. In this case the Bayes optimal predictions $\hat{a}$ are shown in the table below.
\begin{table}[htbp]
\centering
\begin{tabular}{ |c|c|c|c|c| }
\hline
$x_1$ & $x_2$ & $a$ & $y$ & $\hat{a}$ \\ 
\hline
$1$ & $0$ & $1$ & $1$ & $0$\\
\hline
$1$ & $1$ & $1$ & $1$ & $1$\\
\hline
$1$ & $1$ & $0$ & $1$ & $1$\\
\hline
$1$ & $0$ & $0$ & $1$ & $0$\\
\hline
$1$ & $1$ & $1$ & $0$ & $1$\\
\hline
$1$ & $0$ & $0$ & $0$ & $0$\\
\hline
\end{tabular}
\end{table}
Next using $\hat{a}$ instead of $a$ to estimate the bias of $f$ we get
\begin{align*}
    \mathbb{P}(f=1 | y=1, \hat{a}=0) &= \mathbb{P}(x_2=1 | y=1, \hat{a}=0)= 0.
\end{align*}
Similarly,
\begin{align*}
    \mathbb{P}(f=1 | y=1, \hat{a}=1) &= \mathbb{P}(x_2=1 | y=1, \hat{a}=1)= 1.
\end{align*}
Hence using a Bayes optimal attribute classifier leads to an estimated bias of one whereas the true bias is zero. In this case, using a random attribute 
predictor, i.e. predicting the class label via a coin toss is better than using the Bayes optimal~one!
\end{proof}

\begin{proof}[Proof of Theorem~\ref{thm:opt-classifier-general}]
To see \eqref{eq:true-bias-updates} notice that from the definition of $\hat{\alpha}$, we have
\begin{align}
\label{eq:intermediate}
    \hat{\alpha} &= \mathbb{P}(f(x)=1 | y=1, \hat{a}=1) = \frac{\mathbb{P}(f(x)=1, y=1, \hat{a}=1)}{\mathbb{P}(y=1, \hat{a}=1)}\nonumber\\ 
    &= \frac{\mathbb{P}(a=0, f(x)=1, y=1, \hat{a}=1) + \mathbb{P}(a=1, f(x)=1, y=1, \hat{a}=1)}
    {\mathbb{P}(y=1, a=0, \hat{a}=1) + \mathbb{P}(y=1, a=1, \hat{a}=1)}
    \end{align}
Next we first simplify the two terms in the numerator. We can write
\begin{align*}
    &\mathbb{P}(a=0, f(x)=1, y=1, \hat{a}=1) \\
    &= \mathbb{P}(\hat{a}=1|f(x)=1, a=0, y=1)\mathbb{P}(f(x)=1|a=0,y=1)\mathbb{P}(a=0,y=1)\\
    &= \delta_1 \beta s.
\end{align*}
Similarly, we write
\begin{align*}
    &\mathbb{P}(a=1, f(x)=1, y=1, \hat{a}=1) \\
    &= \mathbb{P}(\hat{a}=1|f(x)=1, a=1, y=1) \mathbb{P}(f(x)=1|a=1,y=1)\mathbb{P}(a=1,y=1)\\
    &= (1-\delta_2) \alpha r.
\end{align*}
For the denominator we have
\begin{align*}
    \mathbb{P}(y=1, a=0, \hat{a}=1) &= \mathbb{P}(\hat{a}=1| y=1, a=0)P(y=1, a=0)\\
    &= g_1 s.
\end{align*}
Similarly, we have
\begin{align*}
    \mathbb{P}(y=1, a=1, \hat{a}=1) &= \mathbb{P}(\hat{a}=1| y=1, a=1)P(y=1, a=1)\\
    &= (1-g_2) r.
\end{align*}
Substituting into \eqref{eq:intermediate} we get
\begin{align*}
    \hat{\alpha} &= \frac{\delta_1 \beta s + (1-\delta_2)\alpha r}{g_1 s + (1-g_2) r}.
\end{align*}
Similarly for $\hat{\beta}$, it is easy to derive:
\begin{align*}
    \hat{\beta} &= \mathbb{P}(f=1 | y=1, \hat{a}=0) = \frac{\delta_2 \alpha r + (1-\delta_1)\beta s}{(1-g_1) s + g_2 r}.
\end{align*}
Hence we have:
\begin{align}
    \hat{\alpha}[g_1 \frac{s}{r}  + (1-g_2)] = \delta_1  \frac{s}{r} \beta + (1-\delta_2) \alpha,
\label{eq_alpha}
\end{align}
\vspace{-0.1in}
\begin{align}
    \hat{\beta}[(1-g_1) + g_2 \frac{r}{s}] = \delta_2 \frac{r}{s} \alpha + (1-\delta_1)\beta.
\label{eq_beta}
\end{align}

Multiplying Eq.~\eqref{eq_alpha} by $(1-\delta_1+\frac{r}{s}\delta_2)$, multiplying Eq.~\eqref{eq_beta} by $(1+\frac{s}{r}\delta_1-\delta_2)$, and subtracting the latter from the former, we get:
\begin{align*}
&(1-\delta_1-\delta_2)(\alpha-\beta) =\\
&\hat{\alpha}[g_1 \frac{s}{r}  + (1-g_2)](1-\delta_1+\frac{r}{s}\delta_2) - \hat{\beta}[(1-g_1) + g_2 \frac{r}{s}](1+\frac{s}{r}\delta_1-\delta_2).
\end{align*}
Dividing the right hand side by $1-\delta_1-\delta_2$ we have Eq.~\eqref{eq:true-bias-updates}.
\end{proof}

\end{document}